\documentclass{article}

\usepackage[final]{colm2026_conference}

\usepackage{microtype}
\usepackage{hyperref}
\usepackage{url}
\usepackage{booktabs}

\usepackage{lineno}

\definecolor{darkblue}{rgb}{0, 0, 0.5}
\hypersetup{colorlinks=true, citecolor=darkblue, linkcolor=darkblue, urlcolor=darkblue}

\usepackage{microtype}
\usepackage{graphicx}
\usepackage{subcaption}
\usepackage{booktabs} 
\usepackage{adjustbox} 
\usepackage{algorithm}
\usepackage{algorithmic}
\usepackage{wrapfig}
\usepackage{hyperref}

\usepackage{amsmath}
\usepackage{amssymb}
\usepackage{mathtools}
\usepackage{amsthm}

\usepackage[capitalize,noabbrev]{cleveref}

\theoremstyle{plain}
\newtheorem{theorem}{Theorem}[section]

\theoremstyle{definition}

\theoremstyle{remark}

\newcommand{\weihuanote}[1]{}

\usepackage[textsize=tiny]{todonotes}

\title{GradAlign: Gradient-Aligned Data Selection for LLM Reinforcement Learning}
\author{
\textbf{Ningyuan Yang}$^{1}$\thanks{Equal contribution.}, \ \
\textbf{Weihua Du}$^{2}$\footnotemark[1], \ \
\textbf{Weiwei Sun}$^{2}$, \ \
\textbf{Sean Welleck}$^{2}$, \ \
\textbf{Yiming Yang}$^{2}$ \\
\\
$^{1}$ Institute for Interdisciplinary Information Sciences (IIIS), Tsinghua University \\
$^{2}$ Language Technologies Institute (LTI), Carnegie Mellon University \\
\\
\texttt{yangny23@mails.tsinghua.edu.cn}, \
\texttt{\{weihuad,weiweis,wellecks,yiming\}@cs.cmu.edu}
}
\begin{document}
\newcommand{\METHOD}{{GradAlign}}
\maketitle

\ifcolmsubmission
\linenumbers
\fi





\begin{abstract}
Reinforcement learning (RL) has become a central post-training paradigm for large language models (LLMs), but its performance is highly sensitive to the quality of training problems. This sensitivity stems from the non-stationarity of RL: rollouts are generated by an evolving policy, and learning is shaped by exploration and reward feedback, unlike supervised fine-tuning (SFT) with fixed trajectories. As a result, prior work often relies on manual curation or simple heuristic filters (e.g., accuracy), which can admit incorrect or low-utility problems.
We propose \textbf{\METHOD}, a gradient-aligned data selection method for LLM reinforcement learning that uses a small, trusted validation set to prioritize training problems whose policy gradients align with validation gradients, yielding an adaptive curriculum.
We evaluate \METHOD{} across three challenging data regimes: unreliable reward signals, distribution imbalance, and low-utility training corpus, showing that \METHOD{} consistently outperforms existing baselines, underscoring the importance of directional gradient signals in navigating non-stationary policy optimization and yielding more stable training and improved final performance.
We release our implementation at \url{https://github.com/StigLidu/GradAlign}.
\end{abstract}

\section{Introduction}

Reinforcement learning (RL) has become an important training paradigm for enhancing the capabilities of large language models (LLMs) in complex tasks and environments~\citep{DeepSeekAI2025DeepSeekR1IR,yang2025swe,sun2025training}. However, the current success of LLM RL is critically dependent on high-quality human-labeled training corpora~\citep{hu2025open,he2025deepmath}, which are expensive to obtain. To support large-scale RL training, researchers increasingly turn to massive, automatically collected corpora~\citep{yue2024mammoth2}. These collections inevitably mix high-quality instances with trivial or misleading ones, along with noisy labels. This raises two key challenges: (1) how to identify instances that are suitable for a given downstream task, and (2) how to design mechanisms that prevent low-quality or misleading data from corrupting the RL training signal.

Existing RL data selection strategies for LLM training are relatively simple and primarily rely on rule-based heuristics. For example, \citet{yu2025dapo} retain problems with intermediate accuracy to alleviate gradient vanishing, whereas \citet{sun2025improving} prioritize samples with accuracy near 50\%, analogous to selecting samples near the decision boundary in active learning. Although such partially solved problems can be informative, these methods rely on scalar difficulty signals and do not consider whether the resulting updates move the policy toward improved downstream performance.
Other approaches score training problems using gradient-based criteria, often emphasizing large parameter updates~\citep{li2025limr}, internal consistency within the training distribution~\citep{li2025learnalign}, or gradient diversity~\citep{liang2025can}. However, these strategies do not explicitly verify alignment with downstream optimization objectives and are typically static, failing to adapt the selection criterion as the policy evolves during RL training.

In this work, we propose \textbf{\METHOD}, an automated, gradient-informed algorithm for selecting training problems in LLM reinforcement learning. \METHOD{} leverages a small, trusted validation set and uses the policy gradient as a first-order surrogate to approximate the expected improvement in validation performance induced by policy updates. For each candidate training problem, we compute an alignment score: the cosine similarity between its policy gradient and the aggregated validation gradient. Training problems are then ranked by this alignment score, favoring updates that are most consistent with improving validation performance.

To address RL non-stationarity, \METHOD{} adopts an adaptive curriculum that periodically recomputes validation gradients and re-scores candidate problems under the current policy, using the validation set as a directional probe rather than a direct scoring or retrieval signal.
\METHOD{} is particularly effective in challenging data regimes common in large-scale LLM RL, including unreliable reward signals~\citep{skalse2022defining}, distribution mismatch between training and downstream problems~\citep{kirk2023understanding}, and datasets with low-utility training instances~\citep{dodge2021documenting}. In these settings, scalar accuracy-based heuristics often fail to distinguish useful learning signals from misleading ones, whereas \METHOD{} consistently prioritizes updates that improve downstream performance.

In summary, our contributions are as follows.
\begin{itemize}
    \item We propose \textbf{\METHOD}, an online RL data-selection method that selects training samples whose updates are aligned with downstream performance improvement.
    \item We identify three challenging data regimes for RL data selection and conduct targeted experiments showing that \METHOD{} consistently outperforms prior baselines in these settings.
\end{itemize}
\section{Related Work}

\subsection{Data Selection for LLMs}
Data selection has emerged as a critical lever for enhancing the training efficiency and performance of LLMs~\citep{Koh2017UnderstandingBP}. 
Early approaches largely relied on static heuristics to filter low-quality web text, using metrics such as perplexity thresholds, language identification, deduplication, and rule-based filters~\citep{Penedo2023TheRD, Raffel2019ExploringTL, Abbas2023SemDeDupDL, Li2024DataCompLMIS, Xie2023DataSF, Lin2024Rho1NA}. 
Beyond filtering, optimizing the data-source mix has been standard practice to balance general capabilities with specific domain knowledge~\citep{Xie2023DoReMiOD, Ye2024DataML}. 
More recent strategies employ diversity-based sampling~\citep{Zhang2024HarnessingDF} or proxy models to score data quality~\citep{Wettig2025OrganizeTW, Liu2024RegMixDM}. 
However, these methods predominantly rely on human intuition or general-purpose heuristics rather than a principled evaluation of how specific data points contribute to the model's learning trajectory, often leading to suboptimal alignment with downstream tasks~\citep{Xia2024LESSSI}.

\subsection{Influence Estimation and Gradient Alignment}
To transcend heuristic data selection, researchers have adopted Influence Functions~\citep{Koh2019OnTA, Grosse2023StudyingLL} to theoretically quantify the impact of individual training instances on validation loss. In the context of LLMs, these ideas have inspired a series of gradient-based data selection methods for pre-training and supervised fine-tuning (SFT)~\citep{Engstrom2024DsDmMD, Wang2023FarewellTA}. Specifically, gradient alignment techniques—which select training samples whose gradients exhibit high cosine similarity with those of a trusted validation set—have proven effective for SFT~\citep{Zhang2024HarnessingDF, Xia2024LESSSI, Yu2024MATESMD, Sun2025EnhancingTD}. Recent work has further extended such estimators to preference data, for example, by pruning reward-model training sets using influence functions~\citep{fein2025influence}. However, these methods remain confined to offline, imitation-style paradigms (SFT and reward-model training), and their effectiveness for online RL policy optimization remains largely unexplored.

\subsection{LLM Reinforcement Learning}
Reinforcement Learning (RL) is increasingly central in unlocking advanced LLM capabilities, particularly for complex reasoning~\citep{DeepSeekAI2025DeepSeekR1IR,yu2025dapo} and autonomous agentic tasks~\citep{du2025generalizable}.
While standard Proximal Policy Optimization (PPO)~\citep{Schulman2017ProximalPO} remains a cornerstone, recent advances such as Group Relative Policy Optimization (GRPO)~\citep{shao2024deepseekmath} and the reasoning-focused DeepSeek-R1~\citep{DeepSeekAI2025DeepSeekR1IR} demonstrate that RL can substantially outperform supervised baselines.
However, RL algorithms are notoriously sensitive to data quality and reward noise; for example, training on trivial or unsolvable problems can induce instability or reward hacking~\citep{hu2025open, Gao2022ScalingLF, Lightman2023LetsVS}.
As a result, although considerable effort has been devoted to selecting valuable data using heuristic signals such as success rates, gradients, or parameter distributions~\citep{yu2025dapo, li2025limr,li2025learnalign,liang2025can}, the curriculum of \textit{which} problems to present to LLMs, and \textit{when}, is typically treated as static. This overlooks the potential of dynamic data selection to stabilize and accelerate RL training.
In this work, we propose gradient alignment as a principled approach for dynamically selecting high-quality data during the RL process.
\begin{figure*}[t]
\centering
\vspace{-3mm}
\includegraphics[width=0.95\textwidth]{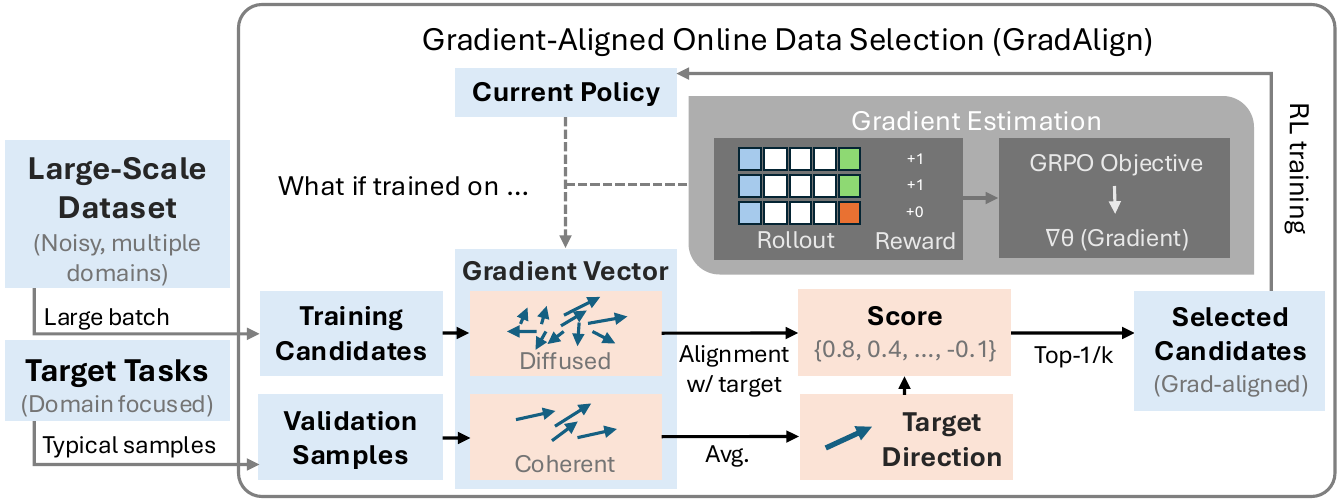}
\vspace{-1mm}
\caption{\textbf{Overview of \METHOD.} \METHOD{} uses a small validation set to estimate a target gradient direction and to score large-scale training candidates by gradient alignment, selecting the top-ranked fraction to form an adaptive RL online learning curriculum.}
\label{fig:teaser}
\vspace{-1mm}
\end{figure*}
\section{Preliminary}

\subsection{Problem Definition}

We study an online reinforcement learning (RL) setting where the learner filters a large, noisy candidate pool to maximize downstream task performance. Formally, at each training round $r$, given the current policy $\pi_\theta$, we receive $M$ candidate problems $\mathcal{P}_r = \{p_{r,1}, \dots, p_{r, M}\}$. A data selector outputs a subset $\mathcal{S}_r \subseteq \mathcal{P}_r$ of size $|\mathcal{S}_r| = M/q$, where $q>1$ is the selection ratio. The policy $\pi_\theta$ is then trained in $\mathcal{S}_r$ using binary rewards computed by a judge model against the reference answers. 

To guide data selection toward informative problems, algorithms can access a small validation set $\mathcal{P}_v$ representing the downstream task (e.g., few-shot examples or public test cases). The final performance is evaluated on a held-out test set $\mathcal{P}_t$.

\subsection{GRPO as the Reinforcement Learning Objective}
\label{sec:grpo}

We use Group Relative Policy Optimization (GRPO)~\citep{shao2024deepseekmath} as a unified training objective to isolate the effect of data selection. For an input $x$ sampled from dataset $\mathcal{D}$, the policy $\pi_\theta$ generates $k$ responses $\{y_j\}_{j=1}^k$ with corresponding rewards $\{r_j\}_{j=1}^k$. GRPO computes a normalized advantage:
$
\hat A_j = \frac{r_j - \bar r}{\sqrt{\frac{1}{k}\sum_{j=1}^{k}(r_j-\bar r)^2}+\epsilon},
$
where $\bar r=\frac{1}{k}\sum_{j=1}^{k} r_j$. The policy is optimized using a clipped objective with KL regularization:
$$
\mathcal{L}_{\text{GRPO}}(\theta) = \mathbb{E}_{x\sim \mathcal{D},\, y\sim\pi_{\theta_{\text{old}}}}\!\left[\min\!\left(\rho_j(\theta)\hat A_j,\, \text{clip}(\rho_j(\theta),1-\epsilon,1+\epsilon)\hat A_j\right)\right] - \beta_{\mathrm{KL}} \mathcal{D}_{\mathrm{KL}}\left[\pi_\theta \,\|\, \pi_{\mathrm{ref}}\right],
$$
where $\rho_j(\theta)= \frac{\pi_\theta(y_j|x)}{\pi_{\theta_{\text{old}}}(y_j|x)}$, and $\beta_{\text{KL}}$ is the coefficient for the KL penalty.
For theoretical analysis, we define a surrogate objective $\mathcal{\tilde L}_{\text{GRPO}}$ that ignores clipping and KL regularization. Assuming on-policy sampling, where the rollout policy is the same as the current policy (i.e., $\pi_{\theta_{\text{old}}} = \pi_\theta$), this yields:
$$
\mathcal{\tilde L}_{\text{GRPO}}(\theta) = \mathbb{E}_{x\sim \mathcal{D},y\sim\pi_{\theta_{\text{old}}}}\left[\hat A_j \log \pi_\theta(y_j|x)\right].
$$
Taking the gradient provides a stable policy-gradient estimator:
$$
\nabla_\theta \mathcal{\tilde L}_{\text{GRPO}}(\theta) = \mathbb{E}_{x\sim \mathcal{D},y\sim\pi_{\theta_{\text{old}}}}[\hat A_j \nabla_\theta \log \pi_\theta(y_j|x)].
$$
While we instantiate our framework using GRPO, it only requires on-policy gradient estimates and is compatible with other RL objectives.

\section{\METHOD}

We propose \textbf{\METHOD}, an online data selection method for RL. While prior work often uses accuracy-based filtering~\citep{yu2025dapo}, accuracy alone cannot determine which problems induce beneficial policy updates. \METHOD{} addresses this limitation by selecting high-quality samples through gradient alignment.

Our key insight is that a training problem is valuable if updating the policy on it improves performance on a trustworthy validation set $\mathcal{P}_v$. We approximate this improvement using first-order policy gradients. \METHOD{} treats the average policy gradient on the validation set as a target direction and ranks candidate problems by their alignment with this direction. As illustrated in Figure~\ref{fig:teaser}, \METHOD{} operates iteratively by estimating validation gradients, selecting aligned training problems, and updating the policy. Rollouts are resampled each round to dynamically adapt to the evolving policy.

\subsection{Validation Improvement as a Proxy}

Unlike supervised fine-tuning, where training data provides detailed solutions, RL only exposes reward signals, making it difficult to directly estimate the utility of individual training problems.
We therefore use \emph{validation improvement} as a proxy for problem utility: a training problem is useful if updating the policy on it improves performance on the validation set $\mathcal{P}_v$, which is more likely to transfer to the held-out test set. A direct approach would be to perform RL updates on each candidate problem and measure the resulting change on $\mathcal{P}_v$, but this is computationally prohibitive. 
\subsection{Validation Improvement via Policy Gradients}

Because accuracy change is a discrete signal, ranking candidates precisely by it is expensive. We instead find that the gradient of the on-policy GRPO loss can serve as a surrogate for accuracy change, as formalized in the following theorem:
\begin{theorem}[Unbiased expected-accuracy gradient estimation]
Under on-policy sampling, binary rewards, unbiased advantage estimation
without normalization, and ignoring KL regularization and clipping,
the policy-gradient estimator is unbiased for the gradient of expected accuracy.
\label{theorem:grpo_gradient_approximation}
\end{theorem}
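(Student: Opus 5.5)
The plan is to reduce the statement to the classical score-function (REINFORCE) identity, and then show that an unbiased, unnormalized advantage contributes no bias. Fix an input $x$ and let the binary reward be $r(x,y)\in\{0,1\}$, given by the judge against the reference answer. The expected accuracy on $x$ is then
\[
J(\theta;x)=\mathbb{E}_{y\sim\pi_\theta(\cdot|x)}[r(x,y)]=\Pr_{y\sim\pi_\theta}[\text{correct}].
\]
Averaging over $x\sim\mathcal{D}$ gives $J(\theta)=\mathbb{E}_x[J(\theta;x)]$. The target is therefore the gradient of a probability, and binarity is what makes ``expected reward'' coincide with ``expected accuracy''.

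\textbf{Step 1 (log-derivative trick).} Since the response space is finite (sequences up to a maximal length), we can write $J(\theta;x)=\sum_y \pi_\theta(y|x)\,r(x,y)$ and differentiate term by term. Using $\nabla_\theta\pi_\theta=\pi_\theta\nabla_\theta\log\pi_\theta$ gives
\[
\nabla_\theta J(\theta;x)=\mathbb{E}_{y\sim\pi_\theta}\bigl[r(x,y)\nabla_\theta\log\pi_\theta(y|x)\bigr].
\]
On-policy sampling ($\pi_{\theta_{\text{old}}}=\pi_\theta$) is exactly what lets the expectation in the surrogate gradient $\nabla_\theta\tilde{\mathcal{L}}_{\text{GRPO}}$ be taken under the same distribution. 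Dropping clipping and KL means the estimator is exactly $\hat A_j\nabla_\theta\log\pi_\theta(y_j|x)$.

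\textbf{Step 2 (baseline invariance).} Write the unnormalized advantage as $\hat A_j=r_j-b_j$. The natural unbiased choice is the leave-one-out baseline $b_j=\frac{1}{k-1}\sum_{i\ne j}r_i$, which requires $k\ge 2$; any baseline independent of $y_j$ given $x$ also works. Independence of $b_j$ from $y_j$, together with $\mathbb{E}_{y\sim\pi_\theta}[\nabla_\theta\log\pi_\theta(y|x)]=\nabla_\theta\sum_y\pi_\theta(y|x)=0$, yields $\mathbb{E}[b_j\nabla_\theta\log\pi_\theta(y_j|x)]=0$. Hence $\mathbb{E}[\hat A_j\nabla_\theta\log\pi_\theta(y_j|x)]=\nabla_\theta J(\theta;x)$. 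Averaging over $j$ and then over $x\sim\mathcal{D}$ gives unbiasedness for $\nabla_\theta J(\theta)$.

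\textbf{Main obstacle: interpreting the hypotheses.} The main obstacle is pinning down what ``unbiased advantage estimation without normalization'' must mean for the argument to go through. The GRPO group mean $\bar r$ includes $r_j$ itself, so $\mathbb{E}[(r_j-\bar r)\nabla\log\pi(y_j)]=\frac{k-1}{k}\nabla J$. I would handle this in one of two ways. The first is to state the result for the leave-one-out baseline. The second is to note that the mean-baseline estimator is unbiased after rescaling by $k/(k-1)$, a positive constant that is irrelevant for cosine-similarity ranking. The std normalization has to be excluded, since it couples all samples nonlinearly; this is precisely why the hypothesis drops it. A minor technical point is justifying the interchange of gradient and sum, which is immediate for a finite sum of smooth (softmax) probabilities.
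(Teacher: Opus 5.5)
Your proposal is correct and follows essentially the same route as the paper: the score-function identity (which the paper invokes as the policy gradient theorem with $Q^\pi(x,y)=R(x,y)$), baseline invariance via the zero-mean score, and the observation that group-mean centering equals the leave-one-out estimator scaled by $(k-1)/k$. The only difference is that you instantiate the advantage concretely as $r_j-b_j$ with a baseline independent of $y_j$. The paper instead states the abstract condition $\mathbb{E}[\hat A\mid x,y]=A^\pi(x,y)$, and both give the same result.
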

The proof is provided in Appendix~\ref{sec:proof_grpo_gradient_approximation}.
We therefore approximate the expected change in validation accuracy using the gradient of the GRPO objective. Based on Theorem~\ref{theorem:grpo_gradient_approximation}, under a first-order approximation, the validation improvement induced by a candidate training problem is proportional to the inner product between the policy gradients computed on the validation set and on the candidate.
In practice, the advantage estimator $\hat A$ is normalized for stable training. As the following theorem states, the normalized policy gradient remains directionally aligned, in expectation, with the expected accuracy gradient:
\begin{theorem}[Direction preservation under advantage normalization] Under on-policy sampling, binary rewards, and $k\ge 2$ i.i.d. rollouts per problem, group-wise advantage normalization biases the magnitude of the per-problem GRPO gradient but preserves its expected direction: the expected normalized GRPO gradient is a non-negative scalar multiple of the gradient of the expected accuracy, and the scalar is strictly positive whenever the expected accuracy lies in $(0,1)$. Consequently, whenever $\nabla_\theta p_\theta(x)\neq 0$, the two gradients share the same direction.
\label{theorem:grpo_gradient_direction_correct}
\end{theorem}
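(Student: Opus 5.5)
We fix a single problem $x$ and write $p=p_\theta(x)=\mathbb{E}_{y\sim\pi_\theta}[r(y)]$ for its expected accuracy. By the log-derivative trick, $\nabla_\theta p=\mathbb{E}_y[r(y)\nabla_\theta\log\pi_\theta(y|x)]$, which is the identity underlying Theorem~\ref{theorem:grpo_gradient_approximation}. The normalized per-problem estimator is
$$
g=\frac{1}{k}\sum_{j=1}^k \hat A_j\, s_j, \qquad s_j=\nabla_\theta\log\pi_\theta(y_j|x), \qquad \hat A_j=\frac{r_j-\bar r}{\sigma+\epsilon},
$$
where $\sigma$ is the group standard deviation. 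Because the rollouts are i.i.d. and $g$ is symmetric in them, $\mathbb{E}[g]=\mathbb{E}[\hat A_1 s_1]$. The plan is to condition on $y_1$, or more precisely on its reward $r_1\in\{0,1\}$ together with the score $s_1$.

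The key structural observation is that, given $r_1$, the factor $\hat A_1$ depends only on $r_1$ and on the number $S'=\sum_{j\ge2} r_j$ of correct answers among the other $k-1$ rollouts. Since those rollouts are independent of $y_1$, we have $S'\sim\mathrm{Bin}(k-1,p)$ independently of $s_1$. Define $c_b=\mathbb{E}[\hat A_1\mid r_1=b]$ for $b\in\{0,1\}$. Then
$$
\mathbb{E}[g]=c_1\,\mathbb{E}[r_1 s_1]+c_0\,\mathbb{E}[(1-r_1)s_1].
$$
The score has zero mean, $\mathbb{E}[s_1]=0$, so $\mathbb{E}[(1-r_1)s_1]=-\mathbb{E}[r_1 s_1]=-\nabla_\theta p$. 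This gives
$$
\mathbb{E}[g]=(c_1-c_0)\nabla_\theta p.
$$
This reduction is the heart of the argument and requires no further computation.

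It remains to show $c_1-c_0\ge 0$, with strict inequality when $p\in(0,1)$. Let $S=r_1+S'$. With binary rewards, $\bar r=S/k$ and $\sigma=\sqrt{\bar r(1-\bar r)}$.
\begin{itemize}
\item If $r_1=1$, then $\hat A_1=(1-S/k)/(\sigma+\epsilon)\ge 0$.
\item If $r_1=0$, then $\hat A_1=-(S/k)/(\sigma+\epsilon)\le 0$.
\end{itemize}
Hence $c_1\ge 0\ge c_0$. For strictness, take $p\in(0,1)$ and $k\ge2$. The event $S'=0$ then has positive probability, and on it, with $r_1=1$, we get $\hat A_1=(1-1/k)/(\sigma+\epsilon)>0$, so $c_1>0$. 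The endpoints give the non-negative, possibly zero, case. At $p\in\{0,1\}$ every rollout has the same reward, so all advantages vanish and $c_1-c_0=0$ trivially. Even then the identity $\mathbb{E}[g]=(c_1-c_0)\nabla_\theta p$ remains consistent. The final claim follows directly: when the scalar is positive and $\nabla_\theta p\ne0$, $\mathbb{E}[g]$ is a positive multiple of $\nabla_\theta p$ and so points in the same direction.

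I expect the main obstacle to be making the conditioning step rigorous. The difficulty is that the normalizer $\sigma$ is coupled to $r_1$, so the expectation does not factor naively. The argument must instead rely on $\hat A_1$ being a deterministic function of $(r_1,S')$, and on $S'$ being independent of $y_1$. This independence is exactly why the hypotheses require i.i.d. rollouts and binary rewards. We should also remark that $\epsilon>0$ does not affect the argument: it keeps the denominator positive, and the sign analysis does not depend on its value.
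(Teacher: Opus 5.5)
\textbf{Verdict.} Your argument is correct, and it takes a genuinely different route from the paper's.

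\textbf{The paper's route.} The paper conditions on the entire reward vector. It first computes $\mathbb{E}[s_j\mid r_j=1]=\nabla_\theta p/p$ and $\mathbb{E}[s_j\mid r_j=0]=-\nabla_\theta p/(1-p)$. It then shows that $\mathbb{E}\bigl[\sum_j (r_j-\bar r)s_j \mid r_1,\ldots,r_k\bigr]=\frac{n(k-n)}{kp(1-p)}\nabla_\theta p$ depends only on the count $n$. Averaging over $N\sim\mathrm{Binomial}(k,p)$ gives the closed form $c_\theta(x)=\sum_{n}\binom{k}{n}p^n(1-p)^{k-n}\frac{n(k-n)}{k^2p(1-p)d_n}$, whose positivity is read off term by term.

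\textbf{Your route.} You use exchangeability to reduce to a single rollout and condition on $y_1$. Because $S'\sim\mathrm{Bin}(k-1,p)$ is independent of $y_1$, $\mathbb{E}[\hat A_1\mid y_1]$ is a function of $r_1$ alone. Binarity makes that function affine in $r_1$, and $\mathbb{E}[s_1]=0$ kills the constant part.

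\textbf{What each route buys.}
\begin{itemize}
\item Your route never divides by $p(1-p)$. It replaces the explicit sum with a sign argument: $c_1\ge 0\ge c_0$, with $c_1>0$ from the event $S'=0$.
\item It makes clear why binary rewards are needed. With three or more reward values, a function of $r_1$ need not be affine, which is exactly what the paper's counterexample exploits.
\item It generalizes. The reduction $\mathbb{E}[g]=(c_1-c_0)\nabla_\theta p$ holds for any advantage of the form $f(r_1,S')$. The sign conclusion holds for any such $f$ with $f(1,\cdot)\ge 0\ge f(0,\cdot)$.
\item The paper's route gives an explicit formula for the scalar, which quantifies the magnitude distortion as a function of $p$ and $k$. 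Your $c_1-c_0$ recovers the same binomial sum if you expand it using $\binom{k-1}{n-1}=\frac{n}{k}\binom{k}{n}$ and $\binom{k-1}{n}=\frac{k-n}{k}\binom{k}{n}$.
\end{itemize}

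\textbf{Two small fixes.}
\begin{itemize}
\item If $\epsilon=0$ is allowed, you need the paper's convention that zero-variance groups get $\hat A_j=0$. Your sign analysis is unaffected.
\item At $p\in\{0,1\}$, one of $c_0,c_1$ conditions on a null event, so asserting $c_1-c_0=0$ is not quite right. Instead note that $\mathbb{E}[g]$ and $\nabla_\theta p=\mathbb{E}[r_1 s_1]$ both vanish there, so the scalar can be taken to be $0$.
\end{itemize}
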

The proof is provided in Appendix~\ref{sec:proof_grpo_gradient_direction_correct}. Note that the binary-reward assumption is necessary; Appendix~\ref{app:nonbinary-counterexample} provides a counterexample in the general scalar-reward setting.

Because gradients are typically normalized in practice, their direction is more important than magnitude. Therefore, we use cosine similarity as the alignment score instead of the inner product. This choice is further supported by our ablation studies (see Section~\ref{sec:metric_comparison}).

\subsection{Algorithm Details}

To estimate the validation policy gradient, we first compute the policy gradient for each validation problem. Specifically, for every problem $p_i$ in the validation set $\mathcal{P}_v=\{p^v_1,\ldots,p^v_{|\mathcal{P}_v|}\}$, we sample $k_v$ rollouts $o_{i,1},\ldots,o_{i,k_v}$ with corresponding binary rewards $r_{i,1},\ldots,r_{i,k_v}\in\{0,1\}$. We then compute the normalized advantages $\hat A_{i,j}$ following the GRPO surrogate objective described in Section~\ref{sec:grpo}. The gradient $g_i$ for the problem $p_i$ is given by
\begin{align*}
g^v_i
&=\nabla_\theta \mathcal{\tilde L}_{\text{GRPO}}(p^v_i) 
=\frac{1}{k_v}\sum_{j=1}^{k_v} \hat A_{i,j}\,\nabla_\theta \log \pi_\theta\!\left(o_{i,j}\mid p^v_i\right).
\end{align*}
Averaging all gradients, we obtain the validation policy gradient $G_v=\frac{1}{|\mathcal{P}_v|}\sum_{i=1}g^v_i$. 

In each training round $r$, for each candidate training problem $p_{r,i}\in\mathcal{P}_r$, we similarly estimate the GRPO policy gradient $g_{r,i}$ using sampled rollouts of size $k_r$. Since the training set is substantially larger than the validation set ($|\mathcal{P}_r| \gg |\mathcal{P}_v|$), we use fewer rollouts ($k_r < k_v$) for training candidates, as high-precision gradient estimates are less critical on training candidates. Finally, we rank the candidate training problems according to the cosine similarity between $g_{r,i}$ and $G_v$, selecting the top-$|S_r|$ candidates.

Our design differs from influence functions, which require fully trained models and second-order information~\citep{Koh2017UnderstandingBP}. Second-order effects are unnecessary here, since each candidate is used for only a few updates, for which first-order approximations suffice.

\subsection{On-Policy Resampling}

Because the policy evolves during RL training, both the expected validation accuracy and the validation GRPO gradients change over time. To mitigate the bias introduced by stale rollouts in advantage estimation, we resample all validation problems and recompute the average validation gradient at the beginning of each data-selection round.

This design differs from \citet{zhu2025data}, which estimates gradients using offline rollouts with importance sampling. Although importance sampling can be unbiased, behaviors that emerge as the policy evolves may fall outside the offline rollout distribution. It also exhibits high variance when behavior and target policies diverge, requiring more samples and incurring higher computational cost for stable estimation.
\begin{figure*}[t]
    \centering
    \begin{subfigure}[t]{0.32\textwidth}
        \centering
        \includegraphics[width=\linewidth]{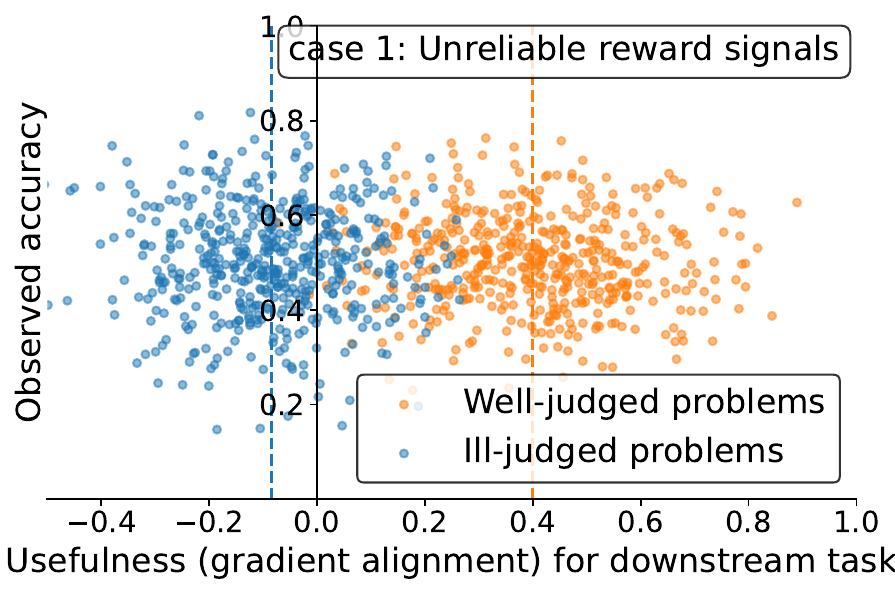}
        \caption{Unreliable reward signals}
        \label{fig:scenario:ill}
    \end{subfigure}
    \hfill
    \begin{subfigure}[t]{0.32\textwidth}
        \centering
        \includegraphics[width=\linewidth]{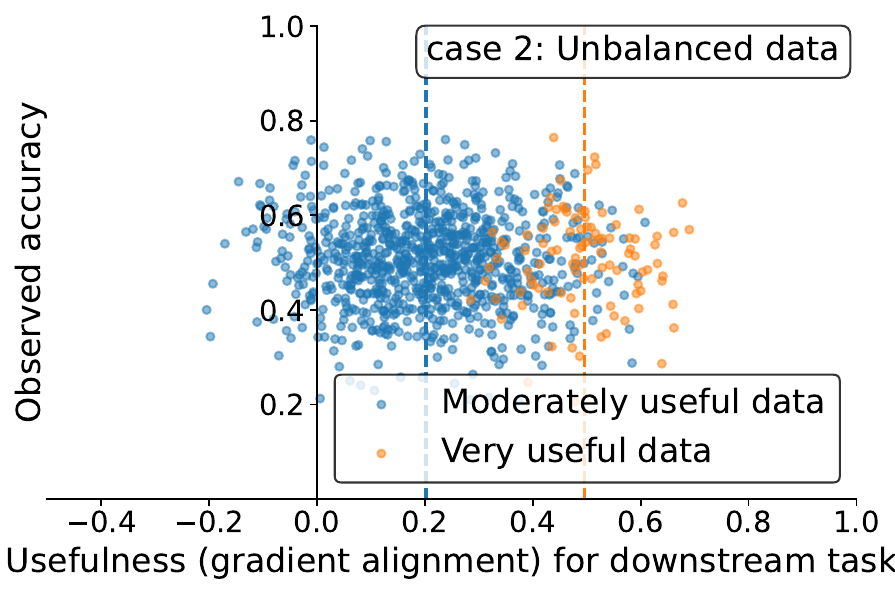}
        \caption{Distribution imbalance}
        \label{fig:scenario:imbalance}
    \end{subfigure}
    \hfill
    \begin{subfigure}[t]{0.33\textwidth}
        \centering
        \includegraphics[width=\linewidth]{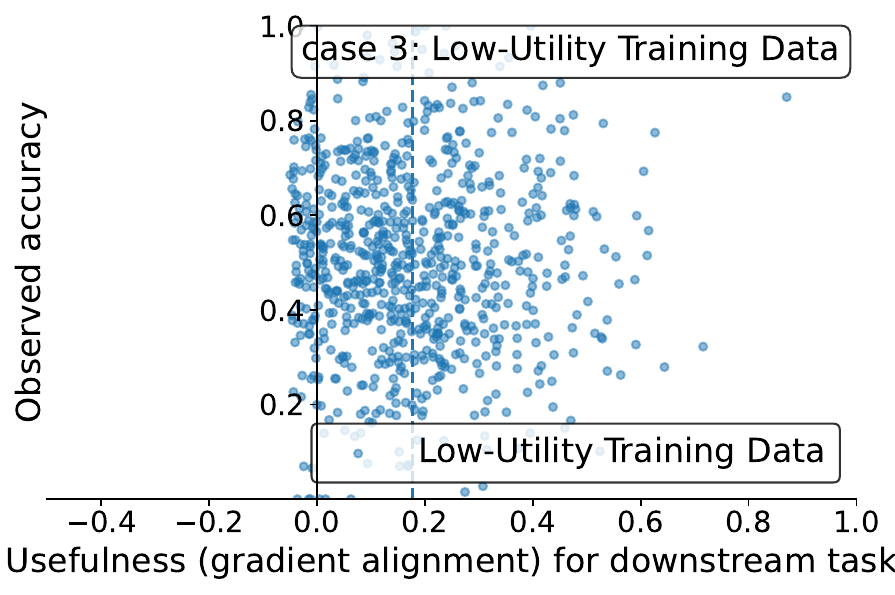}
        \caption{Low-utility training data}
        \label{fig:scenario:auto}
    \end{subfigure}

    \vspace{-2mm}
    \caption{\textbf{Illustration of Three Challenging Data-Selection Scenarios.}
    Each panel shows a failure mode where accuracy-based filtering fails to
    identify training samples that improve downstream performance.}
    \label{fig:scenario}
\end{figure*}

\subsection{Regarding KL Regularization}
Although we do not add KL regularization in our experiments to isolate the effect of data selection, we could modify the surrogate GRPO loss to incorporate a KL penalty for both the training and validation sets. The modified surrogate objective would be formulated as:
\[
\mathcal{\tilde L}_{\text{GRPO-KL}}(\theta)
=
\mathbb{E}_{x,y\sim\pi_\theta}
\left[\hat A_j \log \pi_\theta(y_j|x)\right]
- \beta_{\mathrm{KL}} \mathcal{D}_{\mathrm{KL}} \left[\pi_\theta \,\|\, \pi_{\mathrm{ref}}\right].
\]
In this formulation, we could still compute the gradient alignment by evaluating the gradient of this modified loss, $\nabla_\theta \mathcal{\tilde L}_{\text{GRPO-KL}}(\theta)$, for both the validation set to establish the target direction and the training candidates for ranking.

\paragraph{Overall Algorithm.}
The pseudocode for \METHOD{} is presented in Algorithm~\ref{alg:method}.
\begin{algorithm}[th]
\caption{Grad-Aligned Data Selection~(\textsc{GradAlign})}
\label{alg:method}\small
\begin{algorithmic}[1]
\STATE {\bfseries Input:} initial policy $\pi_\theta$; validation set $\mathcal{P}_v$; number of training rounds $R$; candidate pools $\{\mathcal{P}_r\}_{r=1}^R$ with $|\mathcal{P}_r|=M$; selection ratio $q$ (select $M/q$ candidates per round); number of rollouts $k_v$ and $k_r$; normalization constant $\epsilon$.
\STATE {\bfseries Output:} trained policy $\pi_\theta$
\FOR{$r=1$ to $R$}
    \STATE \textcolor{gray}{\textbf{(Resample \& compute validation gradient)}}
    \STATE Initialize $G_v \leftarrow 0$.
    \FOR{each validation problem $p^v \in \mathcal{P}_v$}
        \STATE Roll out $k_v$ samples $\{o_j\}_{j=1}^{k_v} \sim \pi_\theta(\cdot \mid p^v)$ and obtain rewards $\{r_j\}_{j=1}^{k_v}$.
        \STATE Compute GRPO gradient:\\ $g^v_p \leftarrow \frac{1}{k_v}\sum_{j=1}^{k_v}\hat A_j \nabla_\theta \log \pi_\theta(o_j \mid p^v)$.
        \STATE Accumulate $G_v \leftarrow G_v + g^v_p$.
    \ENDFOR
    \STATE Average $G_v \leftarrow \frac{1}{|\mathcal{P}_v|}G_v$.

    \STATE \textcolor{gray}{\textbf{(Score candidates by gradient alignment)}}
    \FOR{each candidate problem $p_{r,i} \in \mathcal{P}_r$}
        \STATE Roll out $k_r$ samples and compute its GRPO gradient estimate $g_{r,i}$ (same procedure as above).
        \STATE Score $s_{r,i} \leftarrow \cos(g_{r,i}, G_v)$.
    \ENDFOR
    \STATE Select $\mathcal{S}_r \leftarrow$ top-$M/q$ candidates in $\mathcal{P}_r$ by $s_{r,i}$.

    \STATE \textcolor{gray}{\textbf{(RL update on selected curriculum)}} 
    \STATE Update $\theta$ by GRPO training on $\mathcal{S}_r$.
\ENDFOR
\end{algorithmic}
\end{algorithm}

\section{Considerations}

\subsection{Applicable Scenarios}
\label{sec:app-scenario}
\METHOD{} is particularly suitable for settings where the training rewards are noisy or unreliable, as the gradients are more dispersed. We highlight three common scenarios: the \textbf{Distinguishability Test with Noisy Rewards}, an \textbf{Unbalanced Dataset}, and \textbf{Low-Utility Training Data}. Details can be found in Appendix~\ref{app:application} with visualization in Figure~\ref{fig:scenario}.

\subsection{Computational Overhead}
\label{sec:overhead}
\METHOD{} introduces additional computation to estimate gradient alignment, but the relative overhead remains tractable. Let $|\mathcal{P}_r|$ and $|\mathcal{P}_v|$ be the number of training candidates and validation problems, evaluated with $k_r$ and $k_v$ rollouts, respectively. When training a selected subset $|\mathcal{S}_r|$ using $n_t$ rollouts per problem, the relative computational overhead is:
$$ \frac{|\mathcal{P}_r| k_r + |\mathcal{P}_v| k_v}{|\mathcal{S}_r| n_t}. $$
In our standard setting ($|\mathcal{P}_r|=5120$, $|\mathcal{S}_r|=1280$, $k_r=k_v=16$, $n_t=128$, $|\mathcal{P}_v| \le 200$), \METHOD{} adds $\approx 65\%$ compute overhead. By comparison, Align~\citep{li2025learnalign} adds $\approx 50\%$ overhead $\left(\frac{|\mathcal{P}_r| k_r}{|\mathcal{S}_r| n_t}\right)$, and this doubles if gradients are not projected for storage to avoid recomputation. Accuracy-based greedy filtering (AccGreedy, see Section~\ref{sec:baselines}) incurs an overhead of $C \frac{|\mathcal{P}_r| k_r}{|\mathcal{S}_r| n_t}$, where $C$ denotes the relative cost of rollouts compared to full updates.
This overhead is justified for three reasons: (1) the small validation set makes target gradient computation negligible; (2) candidate gradient computation adds only a constant-factor overhead over accuracy-based filtering~\citep{kaplan2020scaling}; and (3) prioritizing highly informative problems improves sample efficiency, significantly reducing ineffective updates and yielding better final performance than unfiltered training. A detailed per-round FLOP breakdown and a memory analysis are provided in Appendix~\ref{app:cost}, confirming that both the compute and memory overhead are invariant to model scale.
\section{Experiments}
\label{sec:experiments}

In this section, we evaluate the effectiveness of {\METHOD} across diverse scenarios. Through these experiments, we aim to verify {\METHOD}'s ability to select high-quality training data for specific domains from a large and noisy training dataset.

\subsection{Experimental Setup}

\paragraph{RL Framework}
All experiments are conducted using GRPO~\citep{shao2024deepseekmath}. We use a sparse binary reward indicating whether the final answer is correct, and all rewards are provided by Qwen2.5-72B-Instruct as the judge. We train base models in the 1.5B--8B range, and the model used in each setting is stated with the corresponding scenario below. Detailed RL training hyperparameters can be found in Appendix~\ref{app:hyper-parameters}. Recalling the three scenarios mentioned in Section~\ref{sec:app-scenario}, we conduct experiments to inspect each of them.

\paragraph{Scenario 1: Distinguishability Test with Noisy Rewards}
To rigorously assess the selector's capacity to discriminate valid learning signals from stochastic noise, we conduct a controlled experiment that systematically perturbs the training set. We train Qwen3-8B-Base on the DAPO training corpus~\citep{yu2025dapo}, introducing controlled corruptions to 50\% of the samples, whereby rewards are decoupled from model outputs and instead sampled from a Bernoulli distribution ($p=0.5$).
This procedure generates a subset of data that statistically mimics the difficulty of ``complex'' problems (50\% pass rate) while being devoid of helpful gradient information.

\paragraph{Scenario 2: Unbalanced Dataset}
To evaluate alignment capabilities under distributional shift, we train Qwen2.5-1.5B-Math on a heterogeneous dataset comprising 4k Countdown Game problems~\citep{gandhi2024stream}, 20k WebInstruct problems, and 10k DAPO problems~\citep{yu2025dapo}. Given that the target domain (Countdown Game) constitutes only $\approx$12\% of the aggregate pool, this setting challenges methods to retrieve domain-specific problems from an imbalanced distribution. We employ a selection ratio of $q=20$ (selecting the top 5\%), with validation and test sets consisting exclusively of Countdown Game problems.

\paragraph{Scenario 3: Low-Utility Training Data}
The training set is the WebInstruct-filtered-unverified dataset~\citep{general-reasoner}, a large-scale collection (808k problems) gathered from internet sources and reflecting typical noisy, uncurated training data. For the two Qwen3-8B-Base settings, we apply a preliminary difficulty filter (pass rates $\in [0.2, 0.8]$) before selection to improve computational efficiency. We consider three target domains.

\textbf{SuperGPQA \& TheoremQA:}
Using Qwen3-8B-Base, we draw a single \emph{combined} validation set from SuperGPQA~\citep{pteam2025supergpqascalingllmevaluation} and TheoremQA~\citep{chen2023theoremqa} to guide selection, and evaluate both held-out test sets; this probes whether a mixed-domain validation signal can improve multiple target domains at once.

\textbf{Mathematical Reasoning Task:}
Using Qwen3-8B-Base, selection is guided by AMC22~\citep{amc22}, with performance evaluated on AMC23 and AIME2425. This is a \emph{partial} validation-test mismatch: the probe is AMC22, but the test set includes the harder AIME2425 from different competitions and years, testing whether {\METHOD} still transfers when the validation set only partially represents the target.

\textbf{MMLU-Pro:}
Using Qwen2.5-1.5B-Math-Instruct and the unfiltered corpus, the target domain is the Math category of MMLU-Pro~\citep{wang2024mmlu}, divided into validation and test sets. This evaluates {\METHOD}'s ability to identify problems beneficial to multiple-choice mathematical reasoning.

\subsection{Baselines}
\label{sec:baselines}
We compare {\METHOD} with three common data selection strategies:
\begin{enumerate}
    \item \textbf{Random Selection:} Uniformly samples training data. Serves as a lower bound.
    \item \textbf{Accuracy Greedy (AccGreedy)}~\citep{sun2025improving}: Prioritizes samples with pass rates closest to 50\%, based on the heuristic that problems near the decision boundary provide the strongest signal.
    \item \textbf{LearnAlign (Align)}~\citep{li2025learnalign}: A method that selects data based on gradient similarity within the training set without using external validation guidance.
\end{enumerate}

\subsection{Experimental Results}

We report results for each scenario below, focusing on downstream accuracy and, where applicable, the composition of selected training data.

\paragraph{Results on Noisy Rewards}

\begin{table*}[t]
  \centering
  \small
  \setlength{\tabcolsep}{5pt}
  \renewcommand{\arraystretch}{1.1}
  \begin{tabular*}{\textwidth}{@{\extracolsep{\fill}}l|c|ccc|cc@{}}
    \toprule
    & \textbf{Val} & \multicolumn{3}{c|}{\textbf{Held-out Test}} & \multicolumn{2}{c}{\textbf{Ratio of Corrupted ($\downarrow$)}} \\
    Method & AMC22 & AMC23 & AIME2425 & \textbf{Test Avg.} & Step 0 & Step 100 \\
    \midrule
    \textbf{\METHOD} (Ours) & \textbf{49.1} & \textbf{68.2} & \textbf{20.5} & \textbf{44.4} & \textbf{17.8\%} & \textbf{29.9\%} \\
    Random & \underline{47.5} & 58.5 & 15.8 & 37.2 & 50.0\%* & 50.0\%* \\
    AccGreedy & 18.1 & 28.5 & 1.9 & 15.2 & 81.8\% & 87.1\% \\
    Align & 45.7 & \underline{61.5} & \underline{16.6} & \underline{39.1} & \underline{33.5\%} & \underline{47.5\%} \\
    \bottomrule
  \end{tabular*}
  \vspace{0.5mm}
  \caption{\textbf{Scenario 1: Controlled Noise Injection (50\% Corrupt).} Downstream performance on AMC/AIME benchmarks and the ratio of corrupted problems in selected training data. Best results are in \textbf{bold}; second-best are \underline{underlined}. All results are reported at step 100. The Test Avg.\ is computed over the held-out test columns only and excludes the validation (Val) column. *Random selection has an expected corruption ratio of 50.0\% by construction.}
  \label{table:noisy_data}
  \vspace{-4mm}
\end{table*}

Table~\ref{table:noisy_data} presents the results of the noise-injection experiment. {\METHOD} significantly outperforms all baselines on most datasets. 
We also report the ratio of corrupted problems selected at steps 0 and 100 in Table~\ref{table:noisy_data}. {\METHOD} consistently selects the lowest proportion of corrupted data. AccGreedy selects the most corrupted data (over 80\%) because the corrupted samples have pass rates near 50\%. Figure~\ref{fig:noisy_curve} shows the training accuracy curves, where {\METHOD} outperforms all baselines.

\paragraph{Results on Unbalanced Dataset}

Table~\ref{table:countdown} shows results for the unbalanced dataset setting. {\METHOD} achieves the best performance on both validation and test sets, followed by Align. Table~\ref{table:countdown} shows the ratio of the selected Countdown Game problems. AccGreedy rarely selects these problems because of the low initial accuracy on Countdown. This result demonstrates the importance of gradient-informed data selection for domain-specific filtering.

\begin{table*}[t]
  \centering
  \small
  \setlength{\tabcolsep}{4pt}
  \renewcommand{\arraystretch}{1.1}
  \begin{tabular*}{\textwidth}{@{\extracolsep{\fill}}l|cc|cc@{}}
    \toprule
    & \multicolumn{2}{c|}{\textbf{Performance}} & \multicolumn{2}{c}{\textbf{Ratio of Countdown ($\uparrow$)}} \\
    Method & Countdown(Val) & Countdown(Test) & Step 0 & Step 50 \\
    \midrule
    \textbf{\METHOD} (Ours) & \textbf{33.8} & \textbf{34.0} & \textbf{92.9\%} & \textbf{62.1\%} \\
    Random & 14.1 & 10.4 & 11.8\%* & 11.8\%* \\
    AccGreedy & 21.7 & 15.2 & 7.8\% & 1.5\% \\
    Align & \underline{31.1} & \underline{28.2} & \underline{47.2\%} & \underline{60.6\%} \\
    \bottomrule
  \end{tabular*}
  \vspace{0.5mm}
  \caption{\textbf{Scenario 2: Unbalanced Dataset.} Downstream performance on Countdown Game and the ratio of Countdown problems in selected training data. Best results are in \textbf{bold}; second-best are \underline{underlined}. All results are reported at step 50. *Random selection has an expected Countdown ratio of 11.8\% by construction.}
  \label{table:countdown}
  \vspace{-1mm}
\end{table*}

\begin{table*}[t]
  \centering
  \small
  \setlength{\tabcolsep}{3pt}
  \renewcommand{\arraystretch}{1.1}
  \begin{tabular*}{\textwidth}{@{\extracolsep{\fill}}l|cc|cc|ccc|cc|c@{}}
    \toprule
    & \multicolumn{2}{c|}{\textbf{SuperGPQA}} & \multicolumn{2}{c|}{\textbf{TheoremQA}} & \multicolumn{3}{c|}{\textbf{AIME/AMC}} & \multicolumn{2}{c|}{\textbf{MMLU-Pro}} & \textbf{Test}\\
    Method & Val & Test & Val & Test & Val & AMC23 & AIME2425 & Val & Test & \textbf{Avg.} \\
    \midrule
    \textbf{\METHOD} & \textbf{33.5} & \textbf{35.7} & \textbf{55.7} & \textbf{59.3} & \textbf{48.8} & \textbf{64.2} & \textbf{18.3} & \textbf{59.8} & \textbf{57.9} & \textbf{47.1} \\
    Random & 29.6 & \underline{35.5} & \underline{53.5} & 55.2 & \underline{47.6} & 60.0 & 17.2 & \underline{57.8} & \underline{57.1} & \underline{45.0} \\
    AccGreedy & \underline{31.7} & 31.6 & 53.1 & 54.9 & 45.0 & \underline{62.3} & \underline{17.8} & 57.5 & 55.1 & 44.3 \\
    Align & 29.8 & 32.7 & 52.8 & \underline{57.9} & 46.4 & 61.5 & 16.1 & 57.1 & 56.6 &  \underline{45.0}  \\
  \bottomrule
  \end{tabular*}
  \vspace{-0.5em}
  \caption{\textbf{Scenario 3: Low-Utility Training Data.} Downstream performance on SuperGPQA, TheoremQA, AMC/AIME, and MMLU-Pro. The training set is WebInstruct.
  Best results are in \textbf{bold}; second-best are \underline{underlined}. All results are reported at step 100. The Test Avg.\ covers only the held-out test columns, excluding all Val columns.}
  \label{table:gpqa_math}
\end{table*}

\paragraph{Performance on Low-Utility Data}

WebInstruct is a large-scale, automatically constructed corpus in which many problems are on-distribution but overly simple, providing valid rewards yet little learning signal for the target task. On this corpus, {\METHOD} attains the highest average performance across all settings, while some baselines remain competitive on individual splits (Table~\ref{table:gpqa_math}). It achieves the best results across all datasets, e.g., 59.3\% on TheoremQA-test, compared to 54.9\% for AccGreedy and 57.9\% for Align.

Although {\METHOD} requires more computation per step, it converges faster and reaches higher final performance than random selection. As shown in Figure~\ref{fig:math_compare}, in Scenario 3, random selection does not plateau but degrades after step 120, as continued training on noisy data erodes the policy, so additional training would not close the gap.

\begin{figure*}[th]
\centering
\vspace{-1mm}
\includegraphics[width=0.9\linewidth]{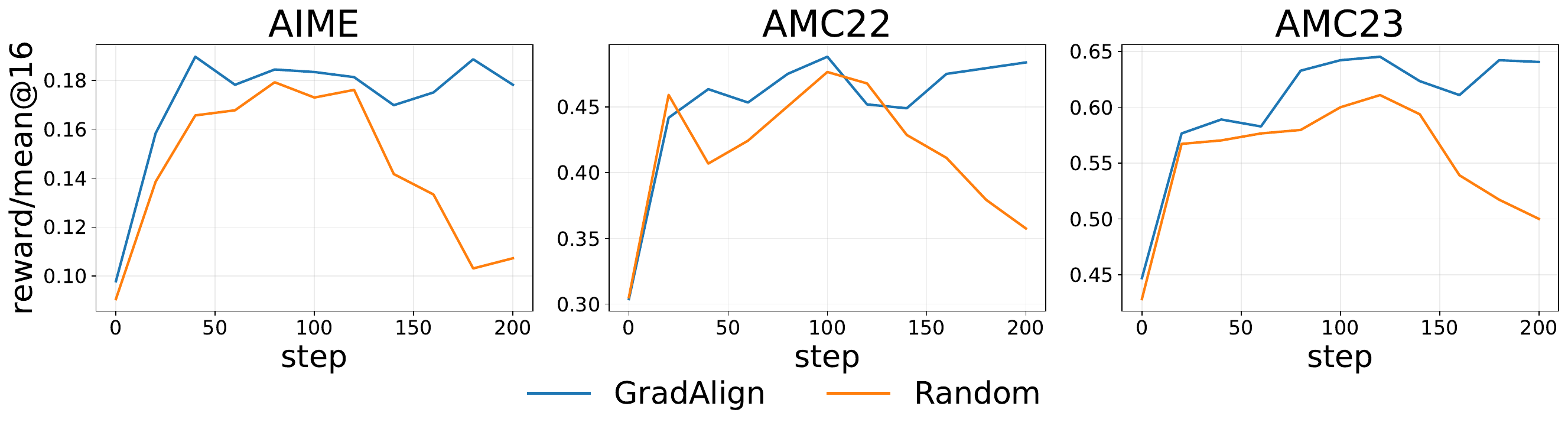}
\vspace{-2mm}
\caption{\textbf{Training Accuracy Curve on AIME2425, AMC22 and AMC23 (Scenario 3).} {\METHOD} (ours) achieves the strongest performance.}
\label{fig:math_compare}
\end{figure*}

\section{Ablation Study}
\label{sec:metric_comparison}

In this section, we conduct ablation studies on three design choices: the alignment score metric (cosine similarity versus inner product), the sample size required for stable gradient estimation, and training directly on the validation set instead of the training set.

\paragraph{Inner product vs. cosine similarity.} 
We compare cosine similarity and the inner product as alignment-score measures in the noisy-reward setting. As shown in Figure~\ref{fig:ablation} (left), cosine similarity achieves higher performance. Further analysis (see Appendix~\ref{app:dis_by_score}) indicates that cosine similarity better separates clean samples from corrupted samples.

\begin{wrapfigure}{r}{0.55\textwidth}
    \centering
    \vspace{-1em}
    \includegraphics[width=\linewidth]{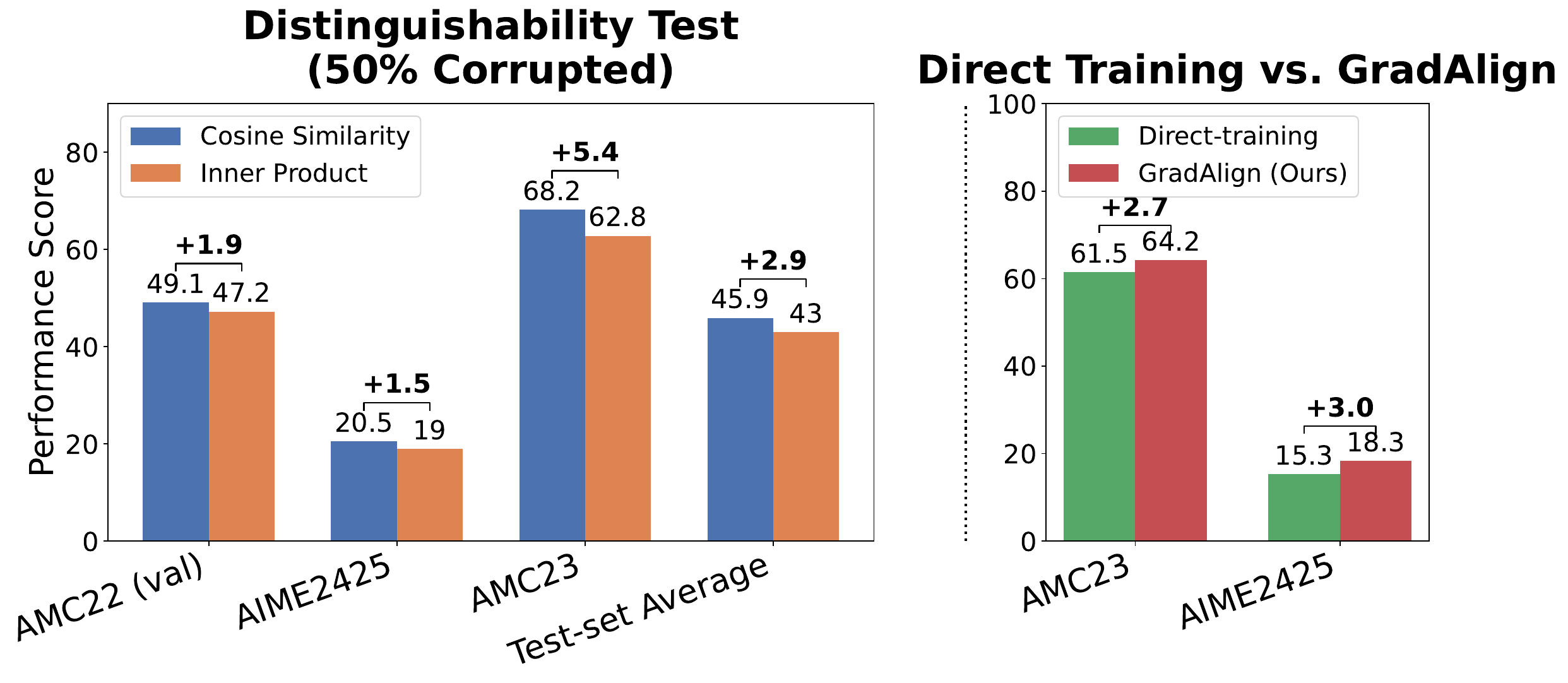}
    \vspace{-1em}
    \caption{\footnotesize \textbf{Performance Analysis of Gradient Alignment and Direct Training.} (Left) Under 50\% reward noise, cosine similarity outperforms the inner product. (Right) Direct training on the validation set overfits, whereas \METHOD{} generalizes. All results are reported at step 100.}
    \label{fig:ablation}
    \vspace{-1em}
\end{wrapfigure}

\paragraph{Directly training on validation set.} 
We study the performance of training directly on the validation set (AMC22) instead of selecting data from the training set. Figure~\ref{fig:ablation} (right) shows that while direct training achieves higher performance on AMC22 itself, it fails to generalize to test sets, unlike {\METHOD}. Intuitively, training directly on the small validation set can lead to memorizing its specific trajectories, whereas {\METHOD} preserves diversity.

\paragraph{Validation set size}
To test the sensitivity of our method to the size of the validation set, we conduct an ablation study in the \textit{Unbalanced dataset} setting. As shown in Table~\ref{table:ablation}, {\METHOD} achieves a stable improvement over baselines even with small validation set sizes. We also measure gradient noise in GRPO, and results are given in Appendix~\ref{app:measure}.
\begin{table*}[h]
  \centering
  \small
  \renewcommand{\arraystretch}{1.1}
  \begin{tabular}{lccc|ccc}
    \toprule
    & \multicolumn{3}{c|}{\textbf{\METHOD} (val.\ set size)} & \multicolumn{3}{c}{\textbf{Baselines}} \\
    \cmidrule(lr){2-4} \cmidrule(lr){5-7}
    & \textbf{10} & \textbf{30} & \textbf{100} & \textbf{Random} & \textbf{AccGreedy} & \textbf{Align} \\
    \midrule
    \textbf{Test Set Accuracy} & 32.7 & 32.1 & \textbf{34.0} & 10.4 & 15.2 & 28.2 \\
    \bottomrule
  \end{tabular}
  \caption{Test set performance at step 50 on the Countdown task, varying the \METHOD{} validation set size (10/30/100 problems). \METHOD{} stays well above all baselines.}
  \label{table:ablation}
\end{table*}

\section{Conclusion}
We presented {\METHOD}, a gradient-aligned data selection method for LLM reinforcement learning that uses a small validation set to prioritize training problems whose policy-gradient updates most improve downstream performance. Across noisy rewards, distribution imbalance, and low-utility web data, {\METHOD} yields more stable training and stronger final results than heuristic and prior baselines. These results suggest that policy-gradient direction is a reliable signal for RL data attribution and curriculum learning, offering a broadly applicable way to improve training efficiency in LLM reinforcement learning.

\paragraph{Limitations}
{\METHOD} relies on a validation set representative of the downstream task: it selects training samples aligned with the validation distribution, so a mismatched probe (e.g., validating on MMLU but testing on AIME) would steer selection toward the wrong problems. It thus targets settings with a relevant validation signal rather than task-agnostic selection. It also adds compute overhead ($\approx 65\%$ over unfiltered training; \cref{sec:overhead}), though this cost is bounded and scale-invariant.

\section*{Acknowledgments}
This work was supported in part by the National Science Foundation under Grant No. DMS-2502281.


\bibliography{main}
\bibliographystyle{icml2026}

\appendix
\onecolumn

\section{Detailed Hyperparameters}
\label{app:hyper-parameters}

We detail the hyperparameters used for GRPO training and the \METHOD{} selection process in Table~\ref{tab:hyperparams}. For all experiments, we utilize the AdamW optimizer with a learning rate of $1 \times 10^{-6}$.

\begin{table*}[th]
    \centering
    \renewcommand{\arraystretch}{1.2}
    \resizebox{\textwidth}{!}{%
    \begin{tabular}{lccccc}
    \toprule
    \textbf{Hyperparameter} & \textbf{MMLU-Pro} & \textbf{\shortstack{SuperGPQA \& \\ TheoremQA}} & \textbf{\shortstack{Math \\ (AIME/AMC)}} & \textbf{Countdown} & \textbf{\shortstack{Noisy Rewards \\ (Distinguishability)}} \\
    \midrule
    \multicolumn{6}{l}{\textit{Model Configuration}} \\
    \midrule
    Base Model & Qwen2.5-1.5B-Math-Instruct & Qwen3-8B-Base & Qwen3-8B-Base & Qwen2.5-1.5B-Math & Qwen3-8B-Base \\
    \midrule
    \multicolumn{6}{l}{\textit{GRPO Training}} \\
    \midrule
    Learning Rate & $1 \times 10^{-6}$ & $1 \times 10^{-6}$ & $1 \times 10^{-6}$ & $1 \times 10^{-6}$ & $1 \times 10^{-6}$ \\
    Optimizer & AdamW & AdamW & AdamW & AdamW & AdamW \\
    Training Problems per Step ($n_{\text{train}}$) & 32 & 128 & 128 & 128 & 128 \\
    Rollouts per Training Problem & 128 & 128 & 128 & 128 & 128 \\
    \midrule
    \multicolumn{6}{l}{\textit{Data Selection}} \\
    \midrule
    Selection Interval ($U$ steps) & 10 & 10 & 10 & 10 & 10 \\
    Rollouts for Estimation ($k_v$) & 16 & 16 & 16 & 16 & 64 \\
    Selection Ratio ($q$) & 4 & 4 & 4 & 20 & 4 \\
    Selected Percentage & 25\% & 25\% & 25\% & 5\% & 25\% \\
    \bottomrule
    \end{tabular}%
    }
    \caption{\textbf{Detailed Hyperparameters across Experimental Settings.} Note that $n_{\text{train}}$ denotes the number of distinct problems per GRPO update step. The selection ratio $q$ determines the fraction of the candidate pool retained ($1/q$).}
    \label{tab:hyperparams}
\end{table*}

\begin{figure*}[th]
\centering
\includegraphics[width=1.0\linewidth]{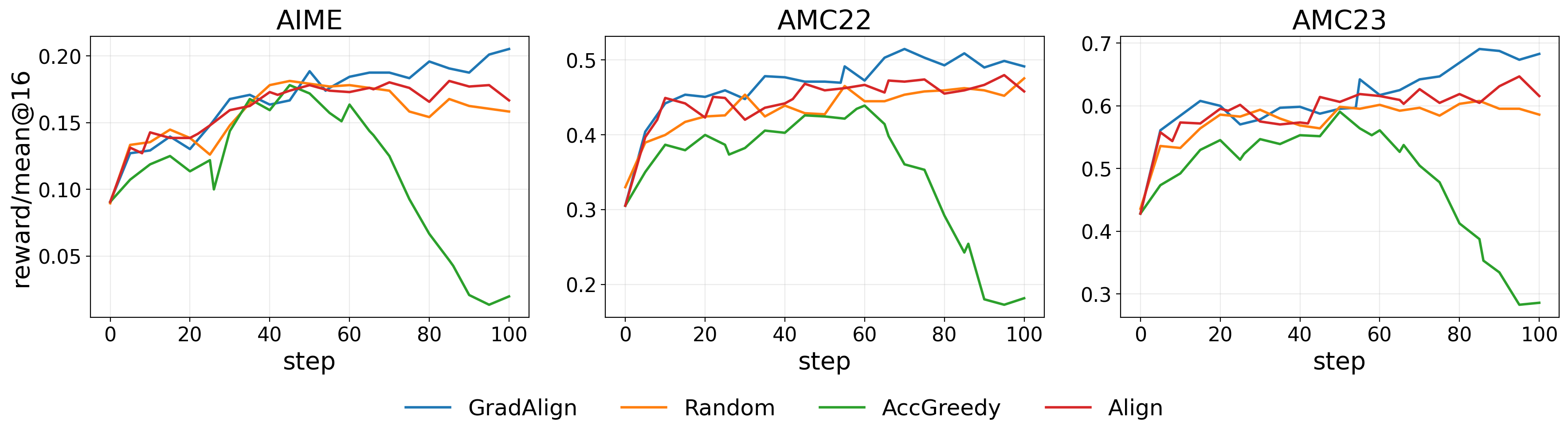}
\caption{\textbf{Training Accuracy Curve on AIME2425, AMC22 and AMC23 (Scenario 1).} {\METHOD} (ours) achieves the strongest performance.}
\label{fig:noisy_curve}
\end{figure*}

\weihuanote{
\begin{table}[H]
  \centering
  \resizebox{0.3\textwidth}{!}{
  \begin{tabular}{lcc}
    \toprule
    Dataset & {\METHOD} & Inner Product  \\
    \midrule
    \multicolumn{3}{c}{\textit{Ratio of Corrupted problems chosen}} \\
    Step 0      & \textbf{17.8\%} & 27.6\%   \\
    Step 50     & \textbf{37.1\%} & 60.3\%   \\
    Step 100     & \textbf{29.9\%} & 44.3\%  \\
    \bottomrule
  \end{tabular}
  }
  \caption{\textbf{Ratio of Corrupted Problems in Selected Data.} Cosine similarity provides better guidance than inner product.
  }\label{table:noisy_data_ratio_ablation}
\end{table}}

\begin{figure}[th]
\centering
\includegraphics[width=0.5\linewidth]{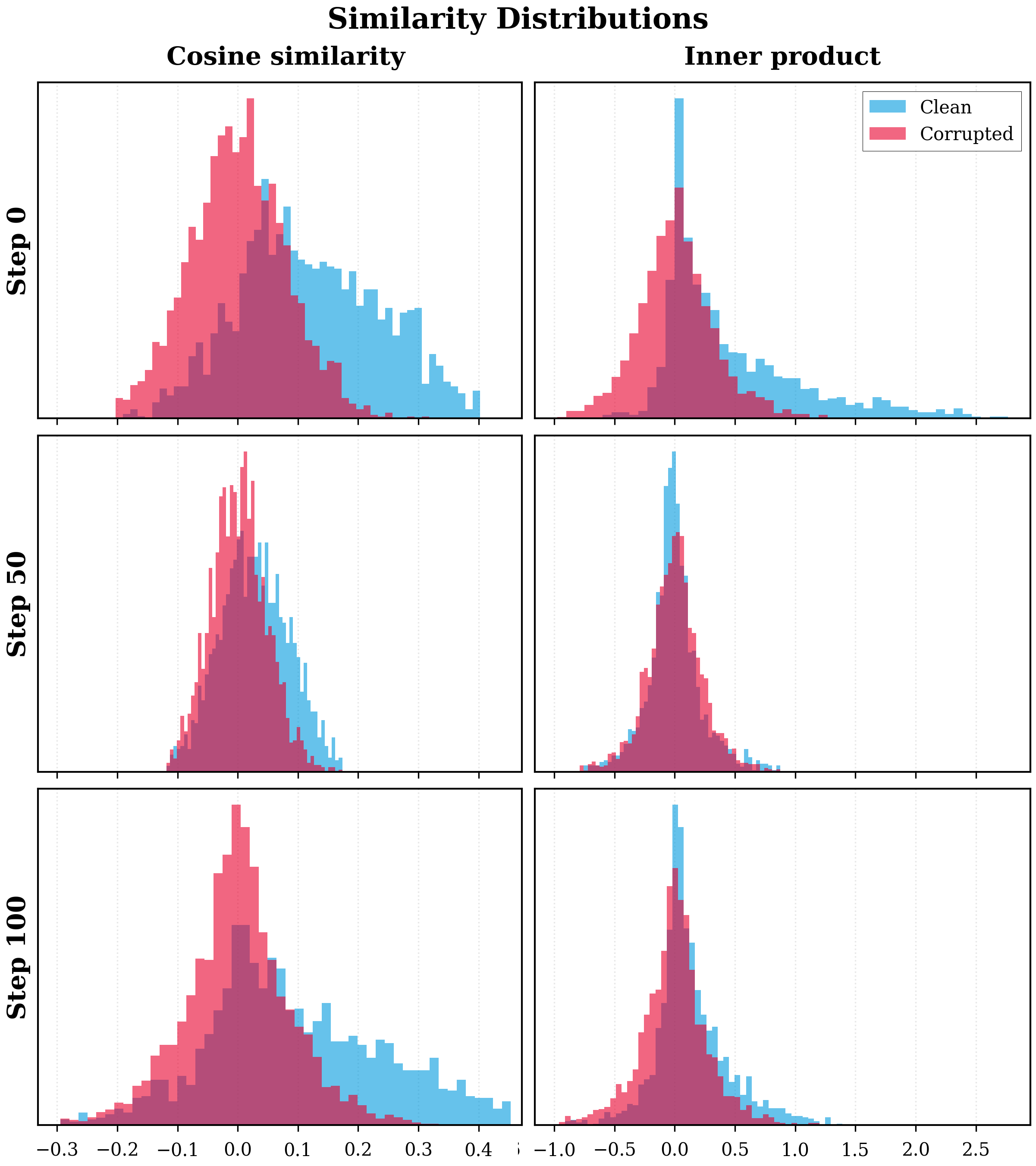}
\caption{\textbf{Distribution of Cosine Similarity and Inner Product Similarity.} Cosine similarity is more indicative than the inner product regarding detecting corrupted instances.}
\label{fig:similarity_dist}
\end{figure}

\section{Candidate Distribution}
\label{app:dis_by_score}
We visualize the score distribution for Scenario 1 (Noisy Rewards), in which half of the training candidates are corrupted by random reward signals sampled from a Bernoulli distribution with $p = 0.5$. An effective data selection method should be able to distinguish clean candidates from corrupted ones. As shown in Figure~\ref{fig:similarity_dist}, we compare two similarity metrics (cosine similarity and inner product) for computing alignment scores. Cosine similarity provides clearer separation between clean and corrupted candidates across different training stages, demonstrating its superior discriminative capability.

\section{Proofs of Validation Improvement}
\label{sec:deferred_proofs}

\subsection{Proof of Theorem~\ref{theorem:grpo_gradient_approximation}}
\label{sec:proof_grpo_gradient_approximation}

\begin{proof}
Let $J(\theta)=\mathbb{E}_{x\sim\mathcal D,\,y\sim\pi_\theta(\cdot|x)}[R(x,y)]$ be the expected reward, where $R(x,y)=\mathbf{1}\!\left[y = y^*(x)\right]$ is the judge outcome indicating whether $y$ matches the reference answer $y^*(x)$. Hence, $J(\theta)$ equals the expected accuracy.
Under on-policy sampling, the policy gradient theorem~\citep{sutton1999policy} gives
\[
\nabla_\theta J(\theta)=\mathbb{E}_{x,y\sim\pi_\theta}\!\left[Q^\pi(x,y)\nabla_\theta\log\pi_\theta(y|x)\right].
\]
Here $Q^\pi(x,y)$ denotes the action-value function, i.e., the expected
return obtained by taking action $y$ under policy $\pi$ given input $x$.
Let $A^\pi(x,y)=Q^\pi(x,y)-b(x)$ denote the advantage function,
where $b(x)$ is any baseline independent of $y$. Then
$\mathbb{E}_{y\sim\pi_\theta}
[b(x)\nabla_\theta\log\pi_\theta(y|x)] = 0.
$
Therefore
\[
\nabla_\theta J(\theta)
=\mathbb{E}_{x,y\sim\pi_\theta}
\!\left[A^\pi(x,y)\nabla_\theta\log\pi_\theta(y|x)\right].
\]
If the advantage estimator $\hat A$ is unbiased, i.e.,
$\mathbb{E}[\hat A \mid x,y] = A^\pi(x,y)$, then
\[
\mathbb{E}_{x,y\sim\pi_\theta}
\!\left[\hat A \nabla_\theta \log \pi_\theta(y\mid x)\right]
=
\nabla_\theta J(\theta),
\]
showing that the resulting update is an unbiased estimator of $\nabla_\theta J(\theta)$, which equals the gradient of expected accuracy.

We note that GRPO does not use such an unbiased advantage: it centers rewards by the group mean $\bar r=\frac1k\sum_{\ell=1}^k r_\ell$, which includes the sampled reward and is therefore not a baseline independent of the action. Writing $\bar r_{-j}=\frac{1}{k-1}\sum_{\ell\neq j} r_\ell$ for the leave-one-out mean, which \emph{is} a valid action-independent baseline, one has $r_j-\bar r=\big(1-\frac1k\big)\big(r_j-\bar r_{-j}\big)$. Hence group-mean centering (without standard-deviation normalization) equals the unbiased leave-one-out estimator scaled by $1-\frac1k$: it is biased in magnitude but preserves the direction of the expected-accuracy gradient.
\end{proof}

\subsection{Proof of Theorem~\ref{theorem:grpo_gradient_direction_correct}}
\label{sec:proof_grpo_gradient_direction_correct}

\begin{proof}
Fix a problem $x$, and define its expected accuracy under the current policy as
\[p_\theta(x) = \mathbb{E}_{y\sim\pi_\theta(\cdot\mid x)}\left[R(x,y)\right],\] where \(R(x,y)\in\{0,1\}\), and thus $p_\theta(x) \in [0, 1]$.

Let \(y_1,\ldots,y_k\overset{\mathrm{i.i.d.}}{\sim}\pi_\theta(\cdot\mid x),r_j=R(x,y_j), p = p_\theta(x)\), and define the score vector \(s_j=\nabla_\theta\log\pi_\theta(y_j\mid x).\)

As is standard in policy-gradient methods, the sampled rewards and normalization statistics are treated as stop-gradient quantities when differentiating the GRPO surrogate objective. We define the group mean \(\bar r=\frac{1}{k}\sum_{j=1}^k r_j\) and, for any normalization constant $\epsilon\ge 0$, the normalized advantage
\(\widehat A_j=\frac{r_j-\bar r}{\sqrt{\frac{1}{k}\sum_{\ell=1}^k(r_\ell-\bar r)^2+\epsilon}},\) with the convention that a zero-variance group (in which all rewards are equal) contributes zero advantage, $\widehat A_j=0$; this covers the case $\epsilon=0$, for which the denominator would otherwise vanish. The corresponding normalized GRPO gradient estimator per-problem is
\[\widehat g_{\mathrm{norm}}(x)=\frac{1}{k}\sum_{j=1}^k\widehat A_j s_j.\]
We first establish two standard score-function identities:
\begin{align*}
\mathbb{E}[s_j]&=\mathbb{E}_{y\sim\pi_\theta(\cdot\mid x)}\left[\nabla_\theta\log\pi_\theta(y\mid x)\right]=0,\\
\mathbb{E}[r_j s_j]
&=
\mathbb{E}_{y\sim\pi_\theta(\cdot\mid x)}
\left[
R(x,y)\nabla_\theta\log\pi_\theta(y\mid x)
\right]
=
\nabla_\theta
\mathbb{E}_{y\sim\pi_\theta(\cdot\mid x)}
\left[R(x,y)\right]
=
\nabla_\theta p.
\end{align*}
If $p\in\{0,1\}$, all sampled rewards are equal, so $\widehat g_{\mathrm{norm}}(x)=0$ and the claim holds with $c_\theta(x)=0$. We therefore assume $p\in(0,1)$ below.
Because the reward is binary, we have \(\mathbb{E}[r_j s_j] = p\mathbb{E}[s_j\mid r_j=1]\). Therefore, $\mathbb{E}[s_j\mid r_j=1] = \frac{\nabla_\theta p}{p}.$ Similarly, for $\mathbb{E}[s_j\mid r_j=0]$, we have
\begin{align*}
(1-p)\mathbb{E}[s_j\mid r_j=0]=\mathbb{E}[(1-r_j)s_j] = \mathbb{E}[s_j]-\mathbb{E}[r_j s_j] = -\nabla_\theta p,
\end{align*}
so \(\mathbb{E}[s_j\mid r_j=0] = -\frac{\nabla_\theta p}{1-p}.\) 

Let $n=\sum_{i=1}^{k}r_i = \bar{r}k$ denote the number of correct rollouts in this realized group; the group variance is
\[
\frac{1}{k}\sum_{\ell=1}^k(r_\ell-\bar r)^2=
\frac{n}{k}\left(1-\frac{n}{k}\right)^2+\frac{k-n}{k}\left(\frac{n}{k}\right)^2
=
\frac{n}{k}
\left(1-\frac{n}{k}\right).
\]
Define $d_n = \sqrt{\dfrac{n}{k}\left(1-\dfrac{n}{k}\right)+\epsilon}$. For $n\in\{1,\ldots,k-1\}$ this is strictly positive; the boundary cases $n\in\{0,k\}$ are zero-variance groups that contribute zero by the convention above (their numerator $n(k-n)$ also vanishes), so the value of $d_n$ there is immaterial. Condition on an arbitrary reward vector \((r_1,\ldots,r_k)\) containing exactly \(n\) correct rollouts. Since the rollouts are independent, the conditional expectation of each score vector depends only on its corresponding reward. Therefore,
\begin{align*}
\mathbb{E}
\left[
\sum_{j=1}^k
(r_j-\bar r)s_j
\;\middle|\;
r_1,\ldots,r_k
\right]
&=
n\left(1-\frac{n}{k}\right)
\mathbb{E}[s_j\mid r_j=1]
+
(k-n)\left(-\frac{n}{k}\right)
\mathbb{E}[s_j\mid r_j=0]\\
&=n\left(1-\frac{n}{k}\right)
\frac{\nabla_\theta p}{p}
+
(k-n)\left(-\frac{n}{k}\right)
\left(
-\frac{\nabla_\theta p}{1-p}
\right)\\
&=\frac{n(k-n)}{k}
\left(
\frac{1}{p}
+
\frac{1}{1-p}
\right)
\nabla_\theta p\\
&=
\frac{n(k-n)}
{kp(1-p)}
\nabla_\theta p.
\end{align*}
This expression depends on the reward vector only through $n$. Let $N$ be the number of correct rollouts. Conditional on $N=n$, the expected normalized gradient is
\begin{align*}
\mathbb{E}
\left[
\widehat g_{\mathrm{norm}}(x)
\mid N=n,x
\right] = \frac{1}{kd_n}\mathbb{E}
\left[
\sum_{j=1}^k
(r_j-\bar r)s_j
\;\middle|\;
N=n, x
\right] = \frac{n(k-n)}
{k^2p(1-p)d_n}
\nabla_\theta p.
\end{align*}
Taking expectation over \(N\sim\operatorname{Binomial}(k,p)\) gives
\begin{align*}
\mathbb{E}
\left[
\widehat g_{\mathrm{norm}}(x)
\mid x
\right]
=
\sum_{n=0}^k
\Pr(N=n)
\frac{n(k-n)}
{k^2p(1-p)d_n}
\nabla_\theta p
=
c_\theta(x)
\nabla_\theta p_\theta(x),
\end{align*}
where 
\begin{align*}
c_\theta(x)=
\sum_{n=0}^{k}
\binom{k}{n}
p^n(1-p)^{k-n}
\frac{n(k-n)}
{k^2p(1-p)d_n}.
\end{align*}
Here the boundary terms $n\in\{0,k\}$ are zero (their numerator $n(k-n)$ vanishes, and for $\epsilon=0$ they are zero by the zero-variance convention above). All the remaining terms are non-negative, and when $p \not= 0,1$ and $ n \not= 0,k$ they are strictly positive. Therefore, $c_\theta(x)$ is positive when $p\in (0,1)$ and $k \geq 2$.

It follows that \[\mathbb{E}
\left[
\widehat g_{\mathrm{norm}}(x)
\mid x
\right]
=
c_\theta(x)
\nabla_\theta p_\theta(x),
\qquad
c_\theta(x)\ge 0,\] where \(c_\theta(x)>0\) whenever \(p_\theta(x)\in(0,1)\) and \(k\ge 2\).
\end{proof}


\subsection{Counterexample for Non-Binary Rewards}
\label{app:nonbinary-counterexample}

The binary-reward assumption in Theorem~\ref{theorem:grpo_gradient_direction_correct} is
essential. We construct a counterexample showing that, for general scalar
rewards, group-wise standard-deviation normalization does not necessarily
preserve the expected policy-gradient direction.

Consider a softmax policy over three actions parameterized by logits
$\theta\in\mathbb{R}^3$,
\[
\pi_\theta(i)
=
\frac{\exp(\theta_i)}
{\sum_{j=1}^3\exp(\theta_j)},
\]
evaluated at the uniform policy
\[
\pi_\theta(1)
=
\pi_\theta(2)
=
\pi_\theta(3)
=
\frac13.
\]
Let
\[
s_i
=
\nabla_\theta\log\pi_\theta(i)
\]
denote the score function associated with action $i$. Under the softmax
parameterization,
\[
s_i=e_i-\pi_\theta,
\]
where $e_i$ is the $i$-th standard basis vector. Assign deterministic
rewards
\[
R(1)=0,\qquad
R(2)=1,\qquad
R(3)=3.
\]

The expected reward is
\[
J(\theta)
=
\mathbb{E}_{i\sim\pi_\theta}[R(i)],
\]
whose policy gradient is
\[
\nabla_\theta J(\theta)
=
\sum_{i=1}^3
\pi_\theta(i)R(i)s_i
=
\frac13 s_2+s_3
=
\frac19
\begin{pmatrix}
-4\\
-1\\
5
\end{pmatrix}.
\]

Now consider group-wise advantage normalization with $k=2$ independent
rollouts in the limit $\epsilon\to0^+$. Zero-variance groups have zero
gradient because their rewards are equal. For two actions with distinct rewards,
\[
\widehat g_{\mathrm{norm}}(i,j)
\longrightarrow
\frac12
\operatorname{sign}(R(i)-R(j))
(s_i-s_j).
\]

The key here is that standard-deviation normalization
removes the magnitude of each pairwise reward difference and retains only
its sign. At the uniform policy, each ordered action pair is sampled with
probability $1/9$. Combining the two orderings of each distinct pair gives
\[
\begin{aligned}
\mathbb{E}
\left[
\widehat g_{\mathrm{norm}}
\right]
&=
\frac19
\left[
-(s_1-s_2)
-(s_2-s_3)
-(s_1-s_3)
\right] \\
&=
\frac19
\begin{pmatrix}
-2\\
0\\
2
\end{pmatrix},
\end{aligned}
\]
where we use
\[
s_i-s_j=e_i-e_j.
\]

Since
\[
\frac19
\begin{pmatrix}
-2\\
0\\
2
\end{pmatrix}
\not\parallel
\frac19
\begin{pmatrix}
-4\\
-1\\
5
\end{pmatrix},
\]
the expected normalized policy gradient is not a scalar multiple of
$\nabla_\theta J(\theta)$. Therefore, group-wise standard-deviation
normalization does not, in general, preserve the expected policy-gradient
direction for non-binary rewards. By continuity, the same conclusion holds
for all sufficiently small $\epsilon>0$.

\section{Applicable Scenarios}
\label{app:application}

\METHOD{} is particularly well-suited for settings where the training rewards are noisy or unreliable, as the gradients are more dispersed. We highlight three common scenarios, with illustrations in Figure~\ref{fig:scenario}.

\paragraph{Unreliable Reward Signals} In many RL settings, the reward signal can be noisy or unreliable, providing an imperfect proxy for true task performance. For example, in LLM RL, rewards are often obtained from model-based judges, and such rewards can be inaccurate on ambiguous problems or inputs beyond the judge’s capabilities. These cases often exhibit intermediate accuracy and are therefore difficult to remove with accuracy-based filtering, which can waste compute and destabilize training. \METHOD{} is well-suited to this setting: gradients induced by misjudged problems tend to be weakly aligned with the validation gradient (often near-orthogonal in high-dimensional space), so \METHOD{} naturally downweights them. In Section~\ref{sec:experiments}, we show that \METHOD{} improves the stability of the training and the final performance under unreliable reward signals.

\paragraph{Distribution Imbalance} When the training dataset contains a wide range of problems (e.g., math problems containing different categories~\citep{albalak2025big}) and the target downstream task is more specific (e.g., one specific puzzle game), many training problems may be correct and of moderate difficulty, but uninformative for the target task, causing accuracy-based methods to fail at filtering such problems. \METHOD{} addresses this mismatch by favoring problems whose gradients align with the validation gradient, while downweighting target-irrelevant ones with low alignment.

\paragraph{Low-Utility Training Data} As RL training scales up, practitioners increasingly rely on large web-mined datasets~\citep{yue2024mammoth2,du2025generalizable}. Unlike distribution imbalance, where useful data is underrepresented, the challenge here is that many training problems are on-distribution but overly simple, yielding valid rewards while providing limited learning signal for the target task. \METHOD{} addresses this setting by prioritizing problems whose gradients align with validation performance and downweighting low-utility samples.

\section{Measuring Gradient Noise}
\label{app:measure}
We conduct an ablation study to determine the number of response samples required for a reliable estimation of the cosine similarity between policy gradients. Specifically, we estimate the gradient cosine similarity using $k_v \in \{32, 128, 512\}$ samples per problem, computing each estimate twice with different random seeds. We then calculate the Pearson correlation between similarity scores obtained from these independent runs using the DAPO training set and the AMC22 validation set. 

Table~\ref{table:correlation} shows that the correlation increases with sample size, indicating that larger $k_v$ produces more stable and accurate estimates. $k_v=512$ achieves the most accurate estimation with consistent similarity scores but requires high computational cost. $k_v=32$ reduces the computational cost significantly but produces noisy estimates. Since \METHOD{} relies on relative ranking rather than exact gradient values, a moderate correlation (e.g., 0.49) already provides a reliable directional signal in high-dimensional gradient space.



\begin{table}[th]
  \centering
  \small
  \setlength{\tabcolsep}{5pt}
  \begin{tabular}{lccc}
    \toprule
    Number of Samples $k_v$           &  32  & 128  & 512   \\
    \midrule
    Cosine Similarity          & 0.30 & 0.49 & 0.79 \\
    \bottomrule
  \end{tabular}
  \vspace{0.5mm}
  \caption{\textbf{Pearson Correlation of Similarity Scores Calculated Using Responses Sampled Twice.} Pearson correlation between cosine similarity scores computed using two different random seeds, for varying numbers of samples per problem ($k_v$). Samples and gradients are produced from Qwen2.5-1.5B-Math-Instruct.
  }\label{table:correlation}
  \vspace{-4mm}
\end{table}

\section{Compute and Memory Analysis}
\label{app:cost}

We provide a detailed per-round FLOP breakdown and a memory analysis for the computational overhead summarized in Section~\ref{sec:overhead}.

\paragraph{FLOP accounting.}
We estimate per-round FLOPs following~\citet{kaplan2020scaling}, where a forward pass costs $\approx 2N$ FLOPs per token and a forward-backward pass $\approx 6N$, with $N$ the model size and $T$ the sequence length. In our standard setting ($|\mathcal{P}_r|=5120$, $|\mathcal{S}_r|=1280$, $k_r=k_v=16$, $n_t=128$, $|\mathcal{P}_v|=200$), the per-round cost decomposes as:
\begin{itemize}
    \item \textbf{Candidate rollouts:} $|\mathcal{P}_r|\,k_r\cdot 2N T \approx 2.6\times 10^{18}$ FLOPs;
    \item \textbf{Candidate gradients:} $|\mathcal{P}_r|\,k_r\cdot 6N T \approx 7.9\times 10^{18}$ FLOPs;
    \item \textbf{Validation gradient:} $|\mathcal{P}_v|\,k_v\cdot 6N T \approx 3.1\times 10^{17}$ FLOPs (negligible);
    \item \textbf{RL update:} $|\mathcal{S}_r|\,n_t\cdot 6N T \approx 1.6\times 10^{19}$ FLOPs.
\end{itemize}
The three \METHOD{} terms (candidate rollouts, candidate gradients, and validation gradient) total $\approx 1.1\times 10^{19}$ FLOPs against a $\approx 1.6\times 10^{19}$-FLOP update, matching the $\approx 65\%$ relative overhead reported in Section~\ref{sec:overhead}. Since $N$ appears linearly in every term, it cancels in the ratio, so this relative overhead is \textbf{invariant to model scale}.

\paragraph{Memory.}
\METHOD{} introduces no new memory peak beyond a standard GRPO step. A GRPO update under AdamW must hold model weights, activations, and optimizer states, which together occupy roughly $16N$ at peak~\citep{rajbhandari2020zero}. During the selection step, \METHOD{} only stores the gradient of the current candidate and the aggregated validation gradient; combined with the model parameters, this is about $6N$, well below the training peak. Gradient alignment thus reuses the existing training footprint rather than adding to it.


\end{document}